\documentclass[letterpaper, 10 pt, conference]{ieeeconf}  

\IEEEoverridecommandlockouts                              

\usepackage{times}

\usepackage{multicol}
\usepackage{graphicx}
\usepackage{float}
\usepackage{booktabs} 
\usepackage{multirow}
\usepackage{svg}

\usepackage[bookmarks=true]{hyperref}

\usepackage{color}

\usepackage{amsmath, amssymb} 

\usepackage{amsthm}
\newtheorem{theorem}{Theorem}

\usepackage{physics} 
\usepackage{siunitx}

\usepackage{cuted} 
\usepackage{caption}

\title{\LARGE \bf
Sequential Object Placement Optimization with Convex Decomposition
}

\author{Yuezhe Zhang$^{1}$, Xiangyu Lyu$^{1}$, Sohan Rudra$^{1}$, Davide Tateo$^{1, 2, 3}$ and Georgia Chalvatzaki$^{1, 4, 5}$
\thanks{$^{1}$Interactive Robot Perception \& Learning, Technical Univsersity of Darmstadt, Germany; $^{2}$Intelligent Autonomous Systems, Technical Univsersity of Darmstadt, Germany; $^{3}$Robotics and Semantic Systems, Lund University, Sweden; $^{4}$Hessian.AI; $^{5}$Robotics Institute Germany.}%
}

\begin{document}

\maketitle

\setlength{\stripsep}{0pt}
\begin{strip}
  \vspace{-5em}
  \centering
  \includegraphics[width=1.0\textwidth]{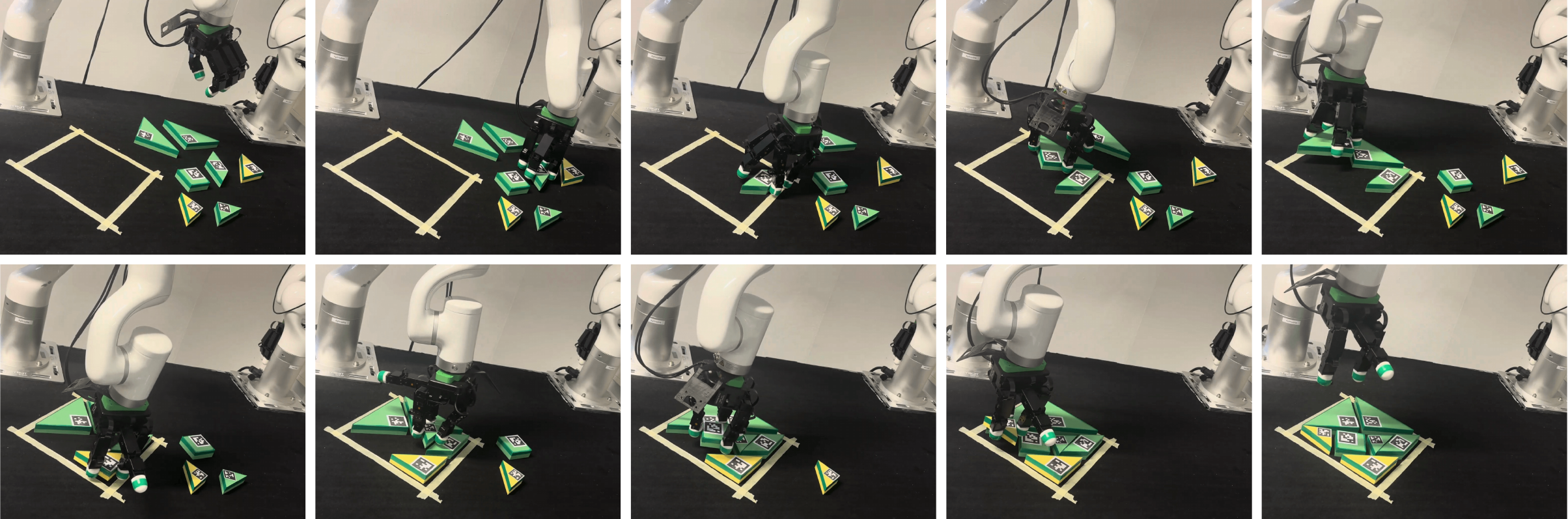}
  \captionof{figure}{A sequence of screenshots of placing the Tangram puzzle using an Allegro Hand and an Xarm.}
  \label{fig:tangram_real}
  \vspace{0.5em}
\end{strip}

\begin{abstract}

Robotic object packing has been a core challenge for robotic deployment in logistics, industry, etc., due to the curse of dimensionality in combinatorial search and the difficulty of dealing with dynamic and contact constraints for irregularly shaped objects. Current heuristic and learning-based methods assume a limited spatial discretization resolution of space, and computation becomes extremely inefficient as discretization accuracy increases. In this work, we eliminate these assumptions by introducing SOPO-CD, a sequential optimization framework that frames object placement as a differentiable nonlinear optimization problem in a decomposed free space. We prove that placing a convex object inside a convex hull is essentially constraining the vertices of the object inside the convex hull. The constraints and their derivatives can be written in closed form and calculated within $200 \unit{\ns}$. We implement a custom solver that achieves optimal placement within tightly constrained space in milliseconds; a $100 \times$ speedup compared to a classical grid search method. We generalize our framework to 2D Tangram, 2D Tetris, and 3D Bin Packing, and have demonstrated strong computational performance and packing utility. We also demonstrate solving a real-world Tangram puzzle online using an Allegro Hand and an Xarm. 





\end{abstract}

\section{Introduction}

Finding efficient placements of as many objects as possible within a designated space has gained multidisciplinary interests from combinatorial optimization, computational geometry, and machine learning. While existing algorithms \cite{wang2019stable, zhao2021online, funk2022learn2assemble, zhao2023learning, yang2023heuristics} have made strides in recent years, they often assume a predefined discretized action space for both position and orientation, and the action space grows explosively with the accuracy of the discretization. Packing Configuration Trees \cite{zhao2021learning, zhao2025deliberate} is the first learning-based method that solves online 3D bin packing in a continuous space. However, it is sample-inefficient, and the exploration-exploitation dilemma makes it difficult to generalize to non-convex shapes. 


In this paper, we introduce SOPO-CD, a sequential optimization framework that frames object placement as a differentiable nonlinear optimization problem in a decomposed free space. We first divide the free space into a set of convex hulls in 2D or 3D using greedy algorithms. We prove that placing a convex object inside a convex hull is essentially constraining the vertices of the object inside the convex hull. The constraints are more efficient than checking collisions among convex object pairs. We provide first- and second-order analytic derivatives of the constraints and the Lagrangian function, resulting in $2-20$ times faster than an AutoDiff method. By selecting a set of convex hulls in free space using heuristics and assigning the convex hull to each convex body of the object, we frame the object placement as a differentiable nonlinear optimization problem. We implement a custom Sequential Quadratic Programming (SQP) solver based on \cite{goulart2024clarabel} to solve the optimization problem and achieve optimal placement within a tightly constrained space in milliseconds. 



We evaluate SOPO-CD on 2D Tangram, 2D Tetris, and 3D Bin Packing. For Tangram, we test on a given sequence of objects and outperform SQP with a state-of-the-art differentiable collision checker, achieving a $10 \times$ speedup and much higher success rates. For Tetris, we test on a random sequence of objects and achieve a packing utility of $77\%$ with a computation time of $20 \unit{\ms}$ per object in a batch of 8, a $50\times$ speedup over a grid search method. For 3D Bin Packing, we test on a random sequence of objects and achieve a packing utility of $80\%$ with a computation time of $15 \unit{\ms}$ per object in a batch of 8, a $100 \times$ speedup over a grid search method. We also demonstrate a real-world Tangram puzzle task using a high-degree-of-freedom system. 




\section{Related Works}

Early interest in the 3D bin packing problem focused primarily on the offline setting, where all items are known a priori and can be placed in arbitrary order. \cite{martello2000three} solved this problem with an exact branch-and-bound approach. However, the problem is strongly NP-Hard, the exact algorithms do not guarantee optimal results within a reasonable amount of time. Therefore, heuristic methods and metaheuristic approaches have been developed to obtain approximate solutions quickly, such as the Bottom-Left (BL) heuristic \cite{baker1980orthogonal} and the Best-Fit-Decreasing heuristic \cite{johnson1974worst}. However, in many real-world application scenarios, e.g., logistics or warehousing, the upcoming items cannot be fully observed. Many online bin packing problems are solved using either heuristics or learning-based methods. Deepest-Bottom-Left-Fill (DBLF) \cite{karabulut2004hybrid} was proposed and combined with a genetic algorithm to place items in the deepest, bottom-most, left-most position. \cite{wang2019stable} performed grid search on a heightmap and proposed a Heightmap-Minimization method to minimize the volume increase of the packed items. Many RL works \cite{zhao2021online, yang2023heuristics} directly learn their policy on a grid by discretizing the full coordinate space, where the action space grows explosively with the discretization accuracy. Packing Configuration Trees \cite{zhao2021learning, zhao2025deliberate} is the first learning-based method that solves online 3D bin packing in a continuous space. However, the size of the packing action space scales with the number of candidate placements, which is still huge in continuous domains, and the sampling costs for training RL agents are thus expensive, making it difficult to generalize to non-convex shapes. 


Therefore, we frame the object placement as an optimization problem that handles the continuous space naturally. It is crucial to consider constraints in a differentiable manner. Differentiable collision checking between two convex shapes can be formulated as a convex optimization problem \cite{montaut2023differentiable, tracy2023differentiable} and solved in microseconds. However, the computation time may grow as the number of collision pairs grows. In the packing problem, this is important because the objects to be placed need to check for collisions with all previously placed objects. In addition, it is challenging to do narrow-space planning with the non-convex obstacles. We thus take a different view by considering the object placement in the decomposed free space, inspired by \cite{deits2015computing, liu2017planning, tordesillas2021faster, marcucci2023motion}. Some recent works also frame stacking \cite{yang2023compositional, shen2024differentiable, chen2025differentiable} as an optimization problem, but they are soft constrained and cannot generalize to placing a large number of objects in a tightly constrained space. 


\section{Preliminaries}

\subsection{Problem formulation}


We study the problem of optimal object placement within a constrained space. Given an object set $\mathcal{O}$ and the collision-free space $\mathcal{C}_{free}$, we aim to find the optimal translation $t \in \mathbb{R}^3$ and rotation parameters $q \in \mathbb{R}^3$ that optimizes the objective function $f(t, q)$, such as the depth of the object, and makes the object collision free. 

\vspace{-0.5cm}
\begin{equation}
\label{eq:problem_def}
\begin{split}
\underset{t, q \in \mathbb{R}^3}{\text{minimize}} \quad & f(t, q) \\
\text{s.t.} \quad & g(\mathcal{O}, t, q) \in \mathcal{C}_{free}
\end{split}
\end{equation}

\noindent where $g$ applies the transformations $\{t, q\}$ on the object set $\mathcal{O}$. Note that the object set $\mathcal{O}$ and collision-free space $\mathcal{C}_{free}$ are not necessarily convex, which makes the problem challenging to solve. 

\subsection{Convex Hull Definition}

The convex hull of a set of points $\mathcal{S} \in \mathbb{R}^d$ is the intersection of all convex sets containing $\mathcal{S}$. We define the vertices of the convex hull as $X = \{x_1, x_2, \dots, x_m\}$. There are two ways of representing the convex hull $conv(X)$.

$\mathcal{V}$-representation writes the convex hull as a convex combination of points in $\mathbb{R}^d$. Mathematically, this reads
\begin{equation}
\label{eq:convex hull V}
\begin{split}
conv(X) = \biggl\{ x \in \mathbb{R}^d \Big| x = \sum_{i=1}^m \omega_i x_i, \sum_{i=1}^m \omega_i = 1, \omega_i \geq 0 \biggr\}
\end{split}
\end{equation}
\noindent where $\omega_i, i=1, \dots, m$ are the convex coefficients. 

Alternatively, $\mathcal{H}$-representation defines the convex hull as a finite number of inequalities. Mathematically, this yields
\begin{equation}
\label{eq:convex hull H}
\begin{split}
conv(X) = \biggl\{ x \in \mathbb{R}^d \Big| Ax \leq b\biggr\}
\end{split}
\end{equation}
\noindent where $A$ and $b$ can be computed from $X$ and they determine the halfspaces of the convex hull.

\subsection{Differentiable Collision via LP}

One state-of-the-art differentiable collision checker is proposed by 
\cite{tracy2023differentiable} as solving a Linear Cone Program between a set of convex primitives. It solves for the minimum uniform scaling $\alpha$ that must be applied to the convex primitives $S_1(\alpha)$ and $S_2(\alpha)$ for an intersection to occur. When primitives are not in contact, the minimum scaling $\alpha > 1$, and when the objects are in contact, the minimum scaling $\alpha \leq 1$. The problem is formulated as:
\begin{equation}
    \label{eq:dc}
    \begin{split}
    \underset{\alpha \in \mathbb{R}, x \in \mathbb{R}^3}{\text{minimize}} \quad & \alpha \\
    \text{s.t.} \quad & x \in \mathcal{S}_1(\alpha), \\
    \quad & x \in \mathcal{S}_2(\alpha), \\
    \quad & \alpha \geq 0,
    \end{split}
\end{equation}

When the convex primitives are polytopes, the constraints are halfspace constraints of the convex hull.






\begin{figure*}[t]
  \centering
  \includegraphics[width=0.93\textwidth]{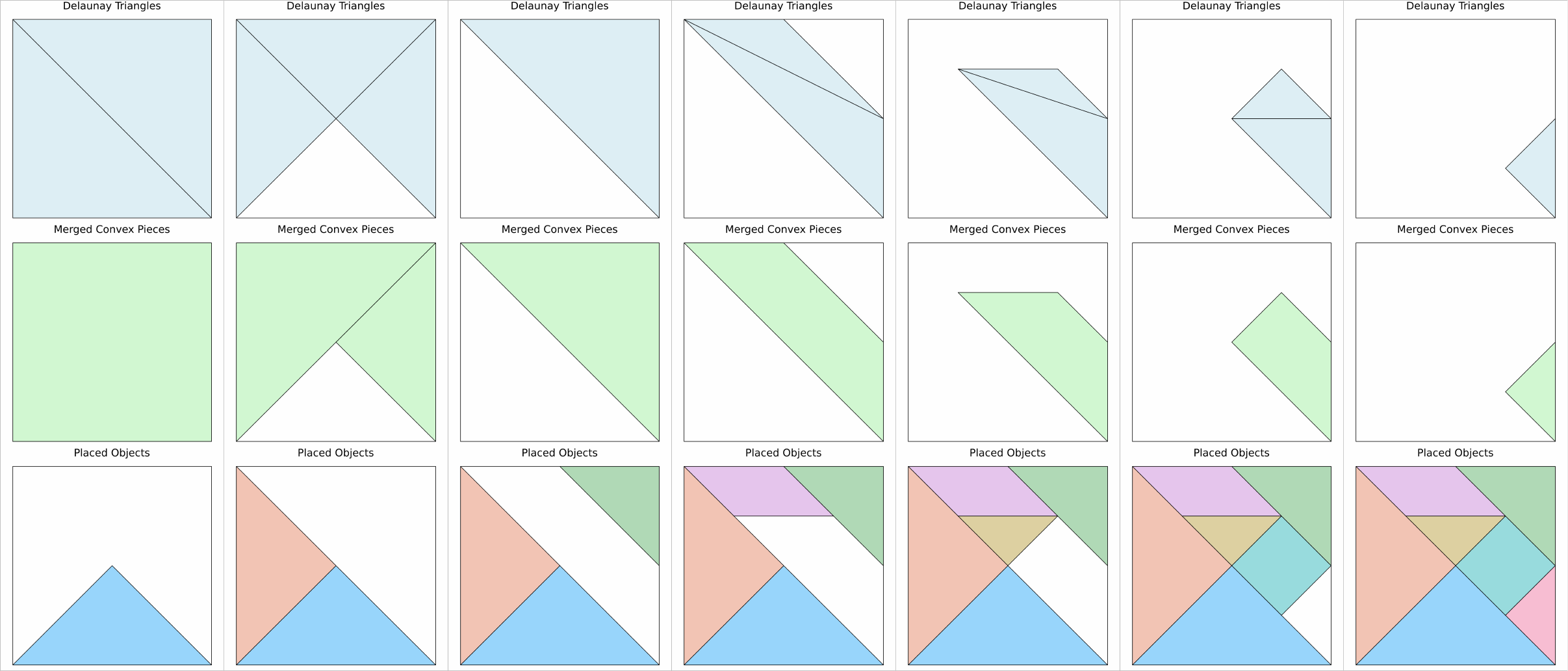}
  \caption{Visualization of the Tangram puzzle with the given object sequence and cost functions. The first two rows show the free-space decomposition using Delaunay Triangulation \cite{VandenHeuvel2024DelaunayTriangulation} and merged convex polygons. At each iteration, we select the largest convex hull, and SQP will calculate the best placement in this convex hull, as shown in the third row.}
  \label{fig:tangram_image}
  \vspace{-1.5em}
\end{figure*}



\section{Methodology}

\subsection{Convex Decomposition}

We decompose the collision-free space into a set of convex sets $\mathcal{C}_{free} = \mathcal{C}_1 \cup \mathcal{C}_2 \cup \cdots \cup \mathcal{C}_L$ while keeping $L$ as small as possible.
For 2D problems, we use constrained Delaunay Triangulation \cite{VandenHeuvel2024DelaunayTriangulation} to split the non-convex free space into a number of triangles. It will output a list of valid triangles, and adjacent triangle pairs. We implement a greedy algorithm to merge adjacent triangles into larger convex polygons. We will only merge the triangles/polygons when they share the same edge, and the potential merged polygon is convex, and the area of the merged polygon is large enough. The algorithm runs in a loop and will terminate when there are no adjacent polygons that can be merged. The complexity of the merging algorithm is $O(N^2)$, where $N$ is the number of triangle pieces.



For 3D Packing, the 3D free space is represented as a 2D heightmap of size $M \times M$. We implement a greedy algorithm to partition the free space into a set of axis-aligned Maximal Empty Cuboids \cite{nandy1998maximal}. The algorithm sequentially scans the heightmap to locate uncovered cells. Upon finding a seed cell at height $z$, it executes a two-stage expansion along the $X$ and $Y$ axes to compute a maximal flat bounding region. This region is then extruded vertically to the maximum container height, forming a candidate 3D convex hull. To optimize execution, regions that fail minimum volume or width constraints are logged into a rejection buffer alongside valid hulls. This design ensures that each coordinate is evaluated as a candidate origin at most once, yielding an overall algorithmic complexity of $O(M^2)$.

\subsection{Differentiable Collision Checking in Free Space}

\begin{theorem}
A convex hull $\mathcal{A}$ is inside another convex hull $\mathcal{B}$ if and only if the vertices of $\mathcal{A}$ are inside  $\mathcal{B}$.
\end{theorem}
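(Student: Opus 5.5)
We prove the two directions separately, using the $\mathcal{V}$-representation \eqref{eq:convex hull V} for $\mathcal{A}$ and only the convexity of $\mathcal{B}$ (or, equivalently, its $\mathcal{H}$-representation \eqref{eq:convex hull H}). Write $\mathcal{A} = conv(X)$ with vertex set $X = \{x_1, \dots, x_m\}$. We assume $\mathcal{B}$ is a convex set, for instance $\mathcal{B} = \{x \mid Ax \leq b\}$.

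\emph{Necessity.} Suppose $\mathcal{A} \subseteq \mathcal{B}$. Each vertex $x_i$ lies in $conv(X)$: take $\omega_i = 1$ and $\omega_j = 0$ for $j \neq i$ in \eqref{eq:convex hull V}. Hence every vertex of $\mathcal{A}$ lies in $\mathcal{B}$, and this direction is immediate.

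\emph{Sufficiency.} Suppose $x_i \in \mathcal{B}$ for all $i$, and take an arbitrary $x \in \mathcal{A}$. By \eqref{eq:convex hull V}, $x = \sum_i \omega_i x_i$ with $\omega_i \geq 0$ and $\sum_i \omega_i = 1$. Using the $\mathcal{H}$-representation of $\mathcal{B}$, we get
\begin{equation*}
Ax = \sum_{i=1}^m \omega_i A x_i \leq \sum_{i=1}^m \omega_i b = b .
\end{equation*}
The inequality holds componentwise because $\omega_i \geq 0$ and $A x_i \leq b$ for every $i$; the final equality uses $\sum_i \omega_i = 1$. Therefore $x \in \mathcal{B}$, and since $x$ was arbitrary, $\mathcal{A} \subseteq \mathcal{B}$.

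If one prefers not to rely on the $\mathcal{H}$-representation, the same conclusion follows by induction on $m$. A convex set is closed under two-point convex combinations, and this extends to any finite convex combination by writing $\sum_i \omega_i x_i = \omega_m x_m + (1-\omega_m)\sum_{i<m} \frac{\omega_i}{1-\omega_m} x_i$ whenever $\omega_m < 1$.

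The argument is elementary, and the main point needing care is the precise meaning of "the vertices of $\mathcal{A}$". The theorem is only as strong as the fact that $\mathcal{A}$ equals the convex hull of its vertex set. For a polytope given as $conv(X)$, this holds whether $X$ is the full generating point set or only its extreme points, since by Minkowski/Krein--Milman a compact convex polytope is the convex hull of its extreme points. We will state this explicitly. This equivalence is what lets the placement constraint reduce to the finitely many inequalities $A(R(q)v_j + t) \leq b$ on the transformed object vertices $v_j$.
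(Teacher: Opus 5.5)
Your proof is correct, but the key step of the ``if'' direction takes a different route from the paper's. The paper works entirely in the $\mathcal{V}$-representation. It writes each vertex of $\mathcal{A}$ as a convex combination of the vertices $y_j$ of $\mathcal{B}$, and an arbitrary point of $\mathcal{A}$ as a convex combination of the vertices of $\mathcal{A}$. It then asserts, without carrying out the computation, that substituting one into the other gives $x=\sum_i\omega_i\sum_j\mu_{ij}y_j=\sum_j\bigl(\sum_i\omega_i\mu_{ij}\bigr)y_j$ with nonnegative weights summing to one.

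You instead use the $\mathcal{V}$-representation only for $\mathcal{A}$, and the $\mathcal{H}$-representation (or plain convexity, via your induction) for $\mathcal{B}$, pushing the convex combination through the linear map $A$. This buys three things. It never requires $\mathcal{B}$ to be the convex hull of finitely many vertices, so it holds for any convex, even unbounded, $\mathcal{B}$. It makes explicit the calculation the paper leaves to the reader. And it matches exactly the constraint form $A(r+t+QV_i)\le b$ that the method actually imposes.

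The paper's version needs only the one representation for both sets. The price is an implicit assumption that $\mathcal{B}$ is a bounded polytope equal to the convex hull of its vertices.

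Your remark on what ``vertices'' means (generating set versus extreme points, reconciled by Minkowski's theorem) is a genuine clarification the paper omits. Note also that the paper labels the two directions the other way round from you; your use of ``necessity'' and ``sufficiency'' is the standard one.
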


\begin{proof}



Sufficiency: Given that $\mathcal{A} \subseteq \mathcal{B}$ and the vertices of $\mathcal{A}$ are a subset of $\mathcal{A}$, it is obvious that the vertices of $\mathcal{A}$ are a subset of $\mathcal{B}$.

Necessity: Given that the vertices of $\mathcal{A}$ are a subset of $\mathcal{B}$, we want to prove that any point in $\mathcal{A}$ also belongs to $\mathcal{B}$. Since the vertices of $\mathcal{A}$ are a convex combination of the vertices of $\mathcal{B}$, and any point in $\mathcal{A}$ is a convex combination of the vertices of $\mathcal{A}$. It is not difficult to calculate that any point in $\mathcal{A}$ can be expressed as a convex combination of the vertices of $\mathcal{B}$.  







\end{proof}


We consider each convex free space as a convex hull, which can be represented using $A\in \mathbb{R}^{k \times3}$ and $b \in \mathbb{R}^k$ in a world frame $\mathbb{W}$ via $\mathcal{H}$-representation. The number of halfspace constraints is $k$. Each object is defined with an attached body reference frame $\mathbb{B}$ with an origin $r \in \mathbb{R}^3$ expressed in a world frame $\mathbb{W}$. The vertices of the object can be expressed as $V_i \in \mathbb{R}^{3}, 1\leq i \leq l$
in frame $\mathbb{B}$, where $l$ is the number of vertices. The translation of the object can be defined as $t\in \mathbb{R}^3$. The orientation of an object is defined by a rotation matrix $^WQ^B \in \mathbb{R}^{3\times3}$ in the world frame, denoted as $Q$ for shorthand.

Therefore, the vertices of the object can be expressed in $\mathbb{W}$ as $r+t+QV_i$. And we can express the constraint that each vertex lies inside the free space as:
\begin{equation}
\label{eq:ineq constraint}
\begin{split}
A(r+t+QV_i) \leq b, 1\leq i \leq l.
\end{split}
\end{equation}

Then the problem of convex object placement can be written as:
\begin{equation}
\label{eq:problem_def2}
\begin{split}
\underset{t, q\in \mathbb{R}^3}{\text{minimize}} \quad & f(t, q) \\
\text{s.t.} \quad & A(r+t+Q(q)V_i) - b \leq 0, 1\leq i \leq l.
\end{split}
\end{equation}

We denote $g_i(t, q) =  A(r+t+Q(q)V_i) - b \in \mathbb{R}^k$. If we want to minimize the height of all the vertices in $\mathbb{W}$, we can choose
\begin{equation}
\label{eq:cost function}
\begin{split}
f(t, q) = \sum_{i=1}^l (t+Q(q)V_i)_z,
\end{split}
\end{equation}
\noindent where the subscript $z$ means to retrieve the third element of the position. Similarly, we can minimize a linear combination of the $x, y, z$ positions of the sum of all vertices. The problem is nonlinear, as the mapping from the representation $q$ to the rotation matrix $Q$ is nonlinear.

\subsection{Convex Hull Selection and Assignment}

Now we can divide the free space into a set of convex hulls, and we can represent the constraint of placing each convex object inside each specific convex hull. However, the assignment between the convex hull and the convex object can result in different solutions. For example, if the volume of the convex hull is smaller than that of the convex object, there is no solution to place it inside. Iterating over all the convex hulls and solve the placement inside each convex hull is exhaustive but time-consuming. We rely on a set of heuristics to improve the solution quality and computational performance. 

For the Tangram puzzle, we select the largest convex hull to place at each step. For 3D Bin Packing, at each step we sort the free-space convex hulls by height and volume. We first select the lowest and largest convex hull, then proceed to the next one. We iterate until we solve the placement optimization for each hull from the top $k$ convex hulls. If there is no solution, we consider it a failure; otherwise, we select the best solution based on the objective function.  

For non-convex object placement, it is more difficult to solve, as the different convex bodies of the object can belong to different convex hulls. A conservative estimate that treats the non-convex object as a single convex hull may work in simple scenarios, but it is not applicable to placement in narrow spaces, such as in the Tetris problem. As shown in Fig. \ref{fig:hull_assignment}, the free space is split into a set of convex hulls. We associate adjacent hulls and sort the resulting pairs by height. We solve for the top $k$ lowest pairs separately. For each pair, we denote the convex hulls as $\mathcal{B}_1, \mathcal{B}_2$, and their halfspace constraints as $\bar{A}_1 \in \mathbb{R}^{k\times3}, \bar{b}_1 \in \mathbb{R}^3$ and $\bar{A}_2 \in \mathbb{R}^{k\times3}, \bar{b}_2 \in \mathbb{R}^3$. For the non-convex tetrominoes, we can split them into two convex bodies as $\mathcal{A}_1, \mathcal{A}_2$, and we can denote their vertices as $V_i, 1 \leq i \leq k_1$ and $V_i, k_1 < i \leq k_2$. Given these two pairs, we can have two assignments $\mathcal{A}_1 \subseteq \mathcal{B}_1, \mathcal{A}_2 \subseteq \mathcal{B}_2$ or $\mathcal{A}_1 \subseteq \mathcal{B}_2, \mathcal{A}_2 \subseteq \mathcal{B}_1$. We write the problem for the assignment $\mathcal{A}_1 \subseteq \mathcal{B}_1, \mathcal{A}_2 \subseteq \mathcal{B}_2$ as:
\begin{equation}
\label{eq:problem_def_tetris}
\begin{split}
\underset{t, q\in \mathbb{R}^3}{\text{minimize}} \quad & f(t, q) \\
\text{s.t.} \quad & \bar{A}_1(r+t+Q(q)V_i) - \bar{b}_1 \leq 0, 1\leq i \leq k_1 \\
                  & \bar{A}_2(r+t+Q(q)V_i) - \bar{b}_2 \leq 0, k_1 < i \leq k_2.
\end{split}
\end{equation}

\begin{figure}[t]
  \centering
  \includegraphics[width=0.5\textwidth]{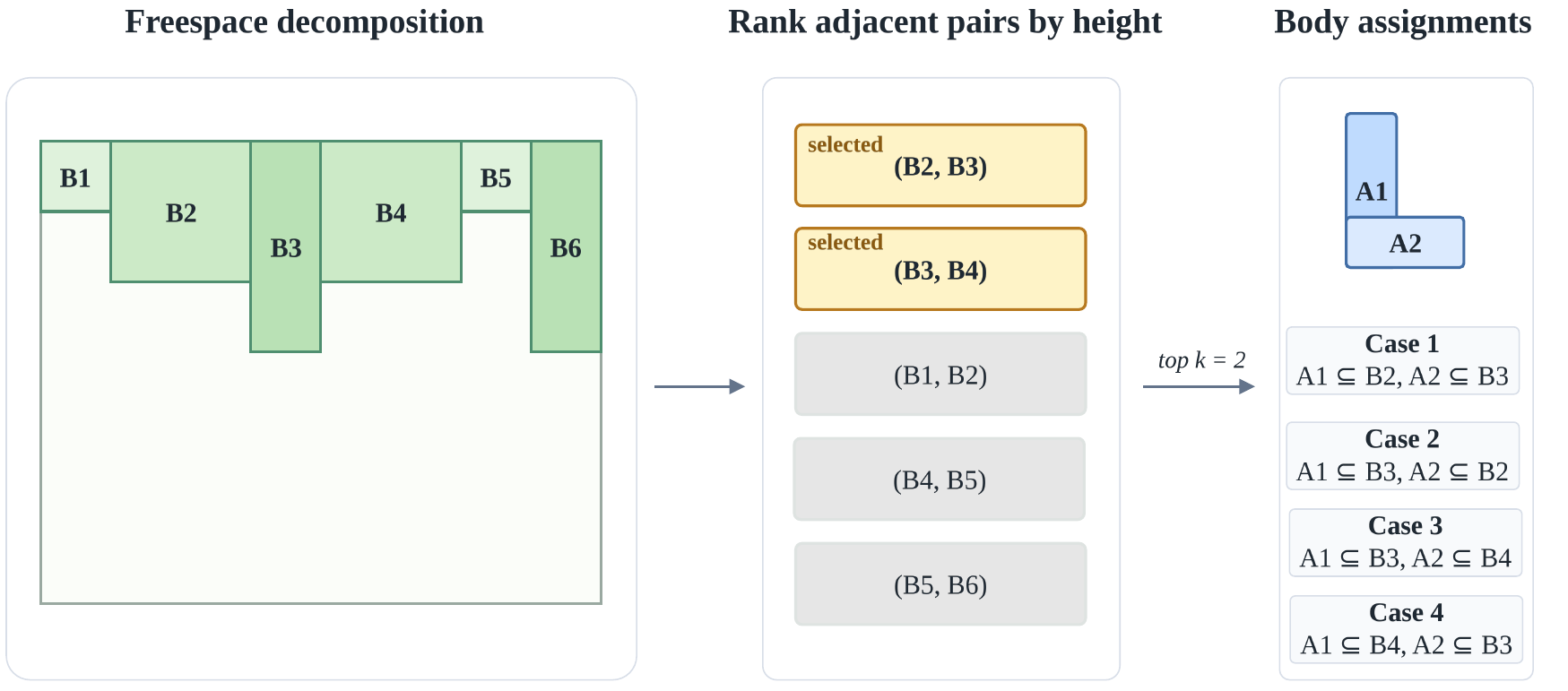}
  \caption{Illustration of hull assignment for placing an L-shape object. The free-space convex hulls can be grouped into 5 pairs of adjacent hulls. We sort the pairs by height and select the top $k=2$ hull pairs. Each convex body of the object is assigned to each hull, resulting in 4 combinations of constraint pairs, and we will solve these 4 problems separately.}
  \label{fig:hull_assignment}
  \vspace{-2em}
\end{figure}


Here, we can adopt similar objective functions as in \ref{eq:cost function}. If we choose the top $k$ lowest adjacent hulls, we need to solve $2k$ different optimization problems.

\subsection{Compute Analytic Derivative}

We have framed the convex object placement in eq. \ref{eq:problem_def2} and Tetris placement in eq. \ref{eq:problem_def_tetris}. Given that the constraints and cost functions are written explicitly with respect to variables $t, q$, we can calculate the gradient of the cost function, the Jacobian of the constraints, and the Hessian of the Lagrangian function directly in closed form. We denote $x = [t, q] \in \mathbb{R}^6$ and consider the constraint $g_i = A(r+t+Q(q)V_i) - b$.

\noindent
We can write the Jacobians as:
\begin{equation}
\label{eq:analytic gradients}
\begin{split}
\pdv{g_i}{x} 
& = \begin{bmatrix}
\pdv{g_i}{t} & \pdv{g_i}{q}
\end{bmatrix} 
= \begin{bmatrix}
A & A\pdv{Q}{q}V_i
\end{bmatrix} \in \mathbb{R}^{k \times 6}
\\
\end{split}
\end{equation}
\noindent where $\pdv{Q}{q}$ is a third-order tensor and can be explicitly written as $\left( \pdv{Q}{q} \right)_{ijk}
= \pdv{Q_{ij}}{q_k}$. \footnote{$q_k$ means the $k$-th value of vector $q$ and does not mean the covariant component.}

Since $g_i \in \mathbb{R}^k$, the second-order derivative of $g_i$ with respect to $x$ is a third-order tensor. To express it efficiently, we denote $g_{ij} = A_j(r+t+Q(q)V_i) - b_j \in \mathbb{R}, 1 \leq j \leq k$, where $A_j \in \mathbb{R}^{1\times3}$ and $b_j \in \mathbb{R}$ represent each halfspace constraint. Then we can calculate the Hessian as:
\begin{equation}
\label{eq:consrtaint hessian}
\begin{split}
\pdv[2]{g_{ij}}{x} 
& = \begin{bmatrix}
\pdv[2]{g_{ij}}{t} & \pdv[2]{g_{ij}}{t}{q} \\
\pdv[2]{g_{ij}}{q}{t} & \pdv[2]{g_{ij}}{q} \\
\end{bmatrix} 
= \begin{bmatrix}
0 & 0 \\
0 & A_j\pdv[2]{Q}{q}V_i \\
\end{bmatrix} \in \mathbb{R}^{6\times6}
\end{split}
\end{equation}
\noindent where $A_j\pdv[2]{Q}{q}V_i \in \mathbb{R}^{3\times3}, \pdv[2]{Q}{q}$ is a fourth-order tensor and can be explicitly written as $\left( \pdv[2]{Q}{q} \right)_{ijkl} = \pdv[2]{Q_{ij}}{q_k}{q_l}$.

We choose the Euler angle as the representation $q = [\alpha, \beta, \gamma]$ and follow the ZYX convention. We have $Q = R_z(\gamma)R_y(\beta)R_x(\alpha)$, where the rotation matrices can be written as:
\begin{equation}
\label{eq:euler rotation}
\begin{split}
R_z(\gamma) &= \begin{bmatrix}
    c_\gamma & -s_\gamma & 0 \\
    s_\gamma & c_\gamma & 0 \\
    0 & 0 & 1 \\
\end{bmatrix}, 
R_y(\beta) = \begin{bmatrix}
    c_\beta & 0 & s_\beta \\
    0 & 1 & 0 \\
    -s_\beta & 0 & c_\beta \\
\end{bmatrix}, \\
R_x(\alpha) &= \begin{bmatrix}
    1 & 0 & 0 \\
    0 & c_\alpha & -s_\alpha \\
    0 & s_\alpha & c_\alpha \\
\end{bmatrix}, 
\end{split}
\end{equation}
where $c$ and $s$ represent $\cos$ and $\sin$ functions. Rotation matrices have some good properties that we can use,
\begin{equation}
\label{eq:rotation derivative}
\begin{split}
\pdv{R_x(\alpha)}{\alpha} &= \begin{bmatrix}
    0 & 0 & 0 \\
    0 & -s_\alpha & -c_\alpha \\
    0 & c_\alpha & -s_\alpha \\
\end{bmatrix} = R_x(\alpha+\pi/2) - E_{11}, \\
\pdv[2]{R_x(\alpha)}{\alpha} &= \begin{bmatrix}
    0 & 0 & 0 \\
    0 & -c_\alpha & s_\alpha \\
    0 & -s_\alpha & -c_\alpha \\
\end{bmatrix} = -R_x(\alpha) + E_{11},
\end{split}
\end{equation}
where $E_{ij}$ is the standard basis matrix that has value 1 at index $i, j$ and 0 elsewhere. Similarly, we can have 
\begin{equation}
\label{eq:rotation derivative2}
\begin{split}
\pdv{R_y(\beta)}{\beta} & = R_y(\beta+\pi/2) - E_{22},
\pdv[2]{R_y(\beta)}{\beta} = -R_y(\beta) + E_{22}, \\
\pdv{R_z(\gamma)}{\gamma} & = R_z(\gamma+\pi/2) - E_{33},
\pdv[2]{R_z(\gamma)}{\gamma} = -R_z(\gamma) + E_{33},
\end{split}
\end{equation}

Using the chain rule, we can compute
\begin{equation}
\label{eq:Q derivative}
\begin{split}
\pdv{Q}{\alpha} & = R_z(\gamma)R_y(\beta)\pdv{R_x(\alpha)}{\alpha} \\
& = R_z(\gamma)R_y(\beta)(R_x(\alpha+\pi/2) - E_{11}) \\
\pdv[2]{Q}{\alpha} &= R_z(\gamma)R_y(\beta)\pdv[2]{R_x(\alpha)}{\alpha} \\
& = R_z(\gamma)R_y(\beta)(-R_x(\alpha) + E_{11}) \\
\pdv[2]{Q}{\alpha}{\beta} &= R_z(\gamma)\pdv{R_y(\beta)}{\beta}\pdv{R_x(\alpha)}{\alpha} \\
& = R_z(\gamma)(R_y(\beta+\pi/2) - E_{22})(R_x(\alpha+\pi/2) - E_{11}) \\
\pdv[2]{Q}{\alpha}{\gamma} &= \pdv{R_z(\gamma)}{\gamma}R_y(\beta)\pdv{R_x(\alpha)}{\alpha} \\
& = (R_z(\gamma+\pi/2) - E_{33})R_y(\beta)(R_x(\alpha+\pi/2) - E_{11})
\end{split}
\end{equation}

Similarly, we can compute $\pdv{Q}{\beta}, \pdv{Q}{\gamma}, \pdv[2]{Q}{\beta}, \pdv[2]{Q}{\gamma}, \pdv[2]{Q}{\beta}{\gamma}$. At every iteration when we have the values of $\alpha, \beta, \gamma$, we first compute the rotation matrices $R_x(\alpha), R_x(\alpha+\pi/2), R_y(\beta), R_y(\beta+\pi/2), R_z(\gamma), R_z(\gamma+\pi/2)$, then we can calculate $\pdv{Q}{q}$ and $\pdv[2]{Q}{q}$ by composing these rotation matrices following eq. \ref{eq:Q derivative}. 

If we choose the cost function $f$ as in eq. \ref{eq:cost function}, we can calculate the gradient of the cost function as
\begin{equation}
\label{eq:cost gradient}
\begin{split}
\nabla f  
& = \begin{bmatrix}
    \nabla_t f \\  \nabla_q f
\end{bmatrix} \in \mathbb{R}^6, \\
\nabla_t f &= l\begin{bmatrix}
    0 \\ 0 \\ 1
\end{bmatrix} = \begin{bmatrix}
    0 \\ 0 \\ l
\end{bmatrix}, \\
\nabla_q f &= 
\begin{bmatrix}
    \pdv{Q_{31}}{q} & \pdv{Q_{32}}{q} & \pdv{Q_{33}}{q}
\end{bmatrix} \sum_{i=1}^lV_i \\
& = \bar{a}\pdv{Q_{31}}{q} + \bar{b}\pdv{Q_{32}}{q} + \bar{c}\pdv{Q_{33}}{q} \in \mathbb{R}^3,
\end{split}
\end{equation}
where the constant $\sum_{i=1}^lV_i$ is denoted as $[\bar{a}, \bar{b}, \bar{c}]^T$. Then we can calculate the Hessian of the cost function as
\begin{equation}
\label{eq:cost hessian}
\begin{split}
\nabla^2_{xx} f &= \begin{bmatrix}
    0 & 0 \\
    0 & \nabla^2_{qq} f
\end{bmatrix} \in \mathbb{R}^{6 \times 6}, \\
\nabla^2_{qq} f &= \bar{a}\pdv[2]{Q_{31}}{q} + \bar{b}\pdv[2]{Q_{32}}{q} + \bar{c}\pdv[2]{Q_{33}}{q} \in \mathbb{R}^{3\times3}.
\end{split}
\end{equation}

We use $c(x)$ and $J(x)$ to denote the whole constraints and Jacobian matrix of the constraints,
\begin{equation}
\label{eq:my cons jacobian}
\begin{split}
c(x) = 
\begin{bmatrix}
g_1(x)\\
g_2(x) \\
\vdots \\
g_l(x) 
\end{bmatrix} \in \mathbb{R}^{kl}, 
\quad
J(x) = 
\begin{bmatrix}
\pdv{g_1}{x}\\
\pdv{g_2}{x} \\
\vdots \\
\pdv{g_l}{x} 
\end{bmatrix} \in \mathbb{R}^{kl \times 6}, 
\end{split}
\end{equation}

The Lagrangian function for this problem is
\begin{equation}
\label{eq:my lagrangian}
\begin{split}
\mathcal{L}(x, \lambda) = f(x) + \lambda^Tc(x),
\end{split}
\end{equation}
\noindent where $\lambda \in \mathbb{R}^{kl}$ is the lagrangian multiplier. We can compute the Hessian of the Lagrangian function as:
\begin{equation}
\label{eq:hessian lagrangian}
\begin{split}
\nabla_{xx}^2\mathcal{L}(x, \lambda) &= \nabla_{xx}^2f + \sum_{i=1}^l \sum_{j=1}^k\lambda_{ij} \pdv[2]{g_{ij}}{x} 
= \begin{bmatrix}
    0 & 0 \\
    0 & \nabla^2_{qq}\mathcal{L}
\end{bmatrix}, \\
\nabla^2_{qq}\mathcal{L}(x, \lambda) & = \nabla^2_{qq}f + \sum_{i=1}^l \sum_{j=1}^k\lambda_{ij} \pdv[2]{g_{ij}}{q} \in \mathbb{R}^{3\times3}
\end{split}
\end{equation}
\noindent where $\nabla_{xx}^2f$ and $\pdv[2]{g_{ij}}{x}$ can be calculated using eq. \ref{eq:cost hessian} and eq. \ref{eq:consrtaint hessian}. To make the calculations more efficient, we use the Kronecker product and vectorization to eliminate for loops when computing the analytic derivatives. The memory is entirely stack-allocated and one-shot during initialization without tedious forward or backward calculations in automatic differentiation. 




 \subsection{Object Placement Optimization via SQP}

 We solve the nonlinear optimization problem \ref{eq:problem_def2} and \ref{eq:problem_def_tetris} via Sequential Quadratic Programming \cite{nocedal2006numerical}. The idea is to model the optimization problem at the current iterate $(x_n, \lambda_n)$ as a quadratic programming subproblem, then use its solution to obtain a new iterate $(x_{n+1}, \lambda_{n+1})$. Suppose at iterate $(x_n, \lambda_n)$, the quadratic problem is modeled as:
\begin{equation}
\label{eq:sqp}
\begin{split}
\underset{p \in \mathbb{R}^6}{\text{minimize}} \quad & f_n + \nabla f_n^Tp+\frac{1}{2}p^T\nabla_{xx}^2\mathcal{L}_n p \\
\text{s.t.}\quad & J_np+c_n \leq0
\end{split}
\end{equation}
where $J_n$ and $c_n$ are the Jacobian matrix and constraints at $x_n$ defined in eq. \ref{eq:my cons jacobian}.


We use Clarabel \cite{goulart2024clarabel} to solve the QP problem. Note that in eq. \ref{eq:hessian lagrangian}, $ \nabla^2_{xx} \mathcal{L}_n$ is zero everywhere except the bottom right block $\nabla^2_{qq} \mathcal{L}_n$, and $\nabla^2_{qq} \mathcal{L}_n$ is possibly indefinite. We add a scaled identity matrix to $ \nabla^2_{xx} \mathcal{L}_n$ to improve numerical stability. We perform a backtracking line search to calculate how far we should move along the direction $p$ and enforce a sufficient decrease in the merit function following the Armijo rule. We choose the merit function $\phi(x, \mu) = f(x) + \mu^Tc(x)$, where $\mu \in \mathbb{R}^{kl}$ is the penalty parameter. The penalty parameter is zero when the corresponding constraint is smaller than the threshold $tol$; otherwise, it is set to a positive constant. The SQP terminates when the norm of the current step $\left\|p\right\|$ and the constraint violation $max(c_n)$ are smaller than the tolerance threshold $tol$. We set $tol = \num{1e-4}$ and the maximum SQP iteration number to $50$.

Like many other optimization methods, SQP is also a local-optimal solver, making it difficult to find global optimal solutions in one shot. In our placement setting, global optimality is essential: a local optimal solution may have zero constraint violation, yet its failure to achieve the best placement leaves no room for the following objects to be placed. We thus parallelize the SQP solver using randomized initial states and select the best placement based on the merit function.

\section{Results}

We implement our algorithms in Julia and leverage BenchmarkTools \cite{chen2016robust} to efficiently run all our modules multiple times to evaluate computational performance. We run experiments on a computer with an AMD Ryzen 9 7950x3D 16-core CPU. 

\subsection{Analytic Derivatives \textup{vs.} AutoDiff}

We first compare the computational performance between analytic derivatives and ForwardDiff \cite{RevelsLubinPapamarkou2016}. During each run, we randomize the object's state in 2D and 3D scenarios and compute the gradient of the cost function, the Jacobian of the constraints, and the Hessian of the Lagrangian. We report the median computation time in Table \ref{tab:analytic_derivative comp}. At every run, we check that the values of the analytic derivatives are identical to the values of ForwardDiff. The computation time of analytic derivatives is consistently better than ForwardDiff in 2D and 3D scenarios, especially for the most time-consuming Hessian calculation; the analytic solution is almost $10-20$ times faster. 

\begin{table}[tbp]
  \centering
  \caption{Computation time for calculating gradient, Jacobian, and Hessian.}
  \label{tab:analytic_derivative comp}
  \resizebox{0.5\textwidth}{!}{
  \begin{tabular}{lcccc}
    \toprule
    & scenario & gradient (\unit{\ns}) & Jacobian (\unit{\ns}) & Hessian (\unit{\ns}) \\
    \midrule
    \multirow{2}{*}{\textbf{Analytic}} 
    & 2D & 16.29 & 24.96 & 41.45  \\
    & 3D  & 90.79  & 188.70  & 368.59   \\
    \cmidrule(lr){1-5}
    \multirow{2}{*}{\textbf{ForwardDiff}} 
    & 2D & 21.16 & 45.70 & 311.65 \\
    & 3D  & 167.82  & 345.27  & 7652.00   \\
    \bottomrule
  \end{tabular}}
\end{table}

\subsection{Solve Tangram Puzzle}

\begin{table}[bp]
  \centering
  \caption{Computation time per step for Tangram.}
  \label{tab:tangram_time}
  \resizebox{0.5\textwidth}{!}{
  \begin{tabular}{llccccccc}
    \toprule
    & & obj 1 & obj 2 & obj 3 & obj 4 & obj 5 & obj 6 & obj 7 \\
    \midrule
    \multirow{3}{*}{\textbf{SOPO-CD}} 
    & Preprocess (\unit{\us}) & 35.34 & 60.35 & 25.05 & 75.44 & 49.64 & 39.94 & 31.78 \\
    & SQP Time (\unit{\ms})   & 0.84  & 0.18  & 0.19  & 0.26  & 0.36  & 0.25  & 0.26  \\
    & SQP Iters               & 12.53 & 3.86  & 4.21  & 4.71  & 11.00 & 4.00  & 6.23  \\
    \cmidrule(lr){1-9}
    \multirow{2}{*}{\textbf{DCOL}} 
    & SQP Time (\unit{\ms})   & 0.87  & 1.02  & 0.97  & 3.43  & 15.67 & 17.63 & 20.76 \\
    & SQP Iters               & 12.68 & 13.49 & 6.94  & 27.31 & 33.96 & 49.91 & 50.00 \\
    \bottomrule
  \end{tabular}}
\end{table}

We evaluate our method on a Tangram puzzle as shown in Fig. \ref{fig:tangram_image}. Here, the object sequence and corresponding cost functions are given, and we aim to find the optimal placement at each time step while minimizing or maximizing the objects' height or width. We compare SOPO-CD to the same SQP with Differentiable Collision (DCOL) \cite{tracy2023differentiable}, where DCOL uses ForwardDiff to calculate the constraint Jacobians and Hessian of the Lagrangian by default. We report the median computation time and average converged SQP iterations for placing each object in Table \ref{tab:tangram_time}. For each object, the convex decomposition preprocessing takes $25-75 \unit{\us}$. With convex decomposition, SQP shows consistent performance, takes less time ($0.2-0.8 \unit{\ms}$ per object), and requires fewer iterations to converge. Compared to us, for SQP with DCOL, the computation time increases as the number of objects grows, because the number of collision pairs between the current object and the placed objects increases. As a consequence, it is very challenging for DCOL to place the last two objects, since the SQP iterations hit the maximum limit of $50$.

\begin{table}[tbp]
  \centering
  \caption{Performance for full sequence Tangram.}
  \label{tab:tangram_time_whole}
  \resizebox{0.5\textwidth}{!}{
  \begin{tabular}{lcccc}
    \toprule
    & n\_threads & total time (\unit{\ms}) & successful placements & memory estimate (MiB) \\
    \midrule
    \multirow{4}{*}{\textbf{SOPO-CD}} 
    & 2 & 9.02   & 4.35 & 2.53  \\
    & 4  & 9.31  & 5.62 & 4.66   \\
    & 8  & 10.02 & 6.63 & 9.44   \\
    & 16  & 12.98 & 6.96 & 17.67   \\
    \cmidrule(lr){1-5}
    \multirow{4}{*}{\textbf{DCOL}} 
    & 2 & 27.48 & 2.63 & 3.26 \\
    & 4  & 46.34  & 3.54  & 6.76   \\
    & 8  & 91.79 & 4.53  & 18.26   \\
    & 16  & 123.58 & 4.95  & 61.95   \\
    \bottomrule
  \end{tabular}}
\end{table}

 We also report the median total time, average number of successful placements, and memory estimates for placing the full sequence of objects in Table \ref{tab:tangram_time_whole}. SOPO-CD consistently performs better than SQP with DCOL. When setting the number of threads to 8, our method can place almost all 7 objects successfully with a total computation time around $10 \unit{\ms}$. 


\subsection{Solve 2D Tetris Puzzle}

We evaluate our method on the 5-Tetris and 8-Tetris benchmarks introduced by cuTAMP \cite{shen2024differentiable} and SPaSM \cite{chen2025differentiable}. The goal is to determine feasible object poses for the 5 and 8 objects so that they fit within a constrained bounding box. We compare with SPaSM without trajectory optimization. The results of our method are shown in Table \ref{tab:tetris_time_whole}. The total time reports the total average time for convex decomposition, constraint extraction, and batch SQP calculations. The total time for 5-Tetris is around $3 - 5 \unit{\ms}$. The total time for 8-Tetris is around $7 - 11 \unit{\ms}$. The success rate and number of successful placements increase with the number of threads, and it is almost always successful when setting the number of threads to 8.

\begin{table}[bp]
  \centering
  \caption{Performance for mini Tetris using SOPO-CD.}
  \label{tab:tetris_time_whole}
  \resizebox{0.5\textwidth}{!}{
  \begin{tabular}{lcccc}
    \toprule
    & n\_threads & total time (\unit{\ms}) & successful placements & success rate (\%) \\
    \midrule
    \multirow{3}{*}{\textbf{5-Tetris}} 
    & 2  & 3.14  & 4.75  & 74.95  \\
    & 4  & 3.46  & 4.94  & 93.71   \\
    & 8  & 4.62  & 4.99  & 99.84   \\
    \cmidrule(lr){1-5}
    \multirow{3}{*}{\textbf{8-Tetris}} 
    & 2  & 10.45 & 7.28  & 35.13 \\
    & 4  & 6.90  & 7.80  & 81.05   \\
    & 8  & 8.22  & 7.99  & 98.71   \\
    \bottomrule
  \end{tabular}}
\end{table}

We run SPaSM on 5-Tetris and 8-Tetris on an NVIDIA GeForce RTX 4090 GPU. The results of SPaSM are shown in Table \ref{tab:tetris_time_whole_spasm}. We keep the remaining hyperparameters unchanged and test only on different termination cost thresholds. The default cost thresholds for 5-Tetris and 8-Tetris are $0.42$ and $0.66$, respectively. In these settings, the algorithms converge within $1$ and $10 \unit{\ms}$, with an average penetration of the wall and sphere between $\num{2e-4}$ and $\num{3e-3}$. We also observe that lowering the cost threshold can improve the penetration distance, but SPaSM will not converge when the threshold is much lower. In contrast, our method can handle hard constraints with a tolerance of $\num{1e-4}$.


\begin{table}[htbp]
  \centering
  \caption{Performance for mini Tetris using SPaSM \cite{chen2025differentiable}.}
  \label{tab:tetris_time_whole_spasm}
  \resizebox{0.5\textwidth}{!}{
  \begin{tabular}{lccccc}
    \toprule
    & cost\_thresh & total time (\unit{\ms}) & wall penetration & sphere penetration & success rate (\%) \\
    \midrule
    \multirow{4}{*}{\textbf{5-Tetris}} 
    & 0.42 & 0.22  & 3.10e-3 & 3.00e-4 & 100  \\
    & 0.38 & 3.52 & 3.08e-3 & 2.86e-4 & 100  \\
    & 0.37 & 95.11 & 2.83e-3 & 3.18e-4 & 100  \\
    & $\leq$0.36 &   -   &   -     &  -      & 0    \\
    \cmidrule(lr){1-6}
    \multirow{3}{*}{\textbf{8-Tetris}} 
    & 0.66 & 6.84  & 2.93e-3 & 2.34e-4 & 100  \\
    & 0.64 & 14.15 & 2.89e-3 & 2.08e-4 & 100  \\
    & 0.63 & 55.78 & 2.97e-3 & 1.86e-4 & 100  \\
    & $\leq$0.62 &   -   &   -     &  -      & 0    \\
    \bottomrule
  \end{tabular}}
\end{table}

\begin{figure}[tbp]
  \centering
  \includegraphics[width=0.5\textwidth]{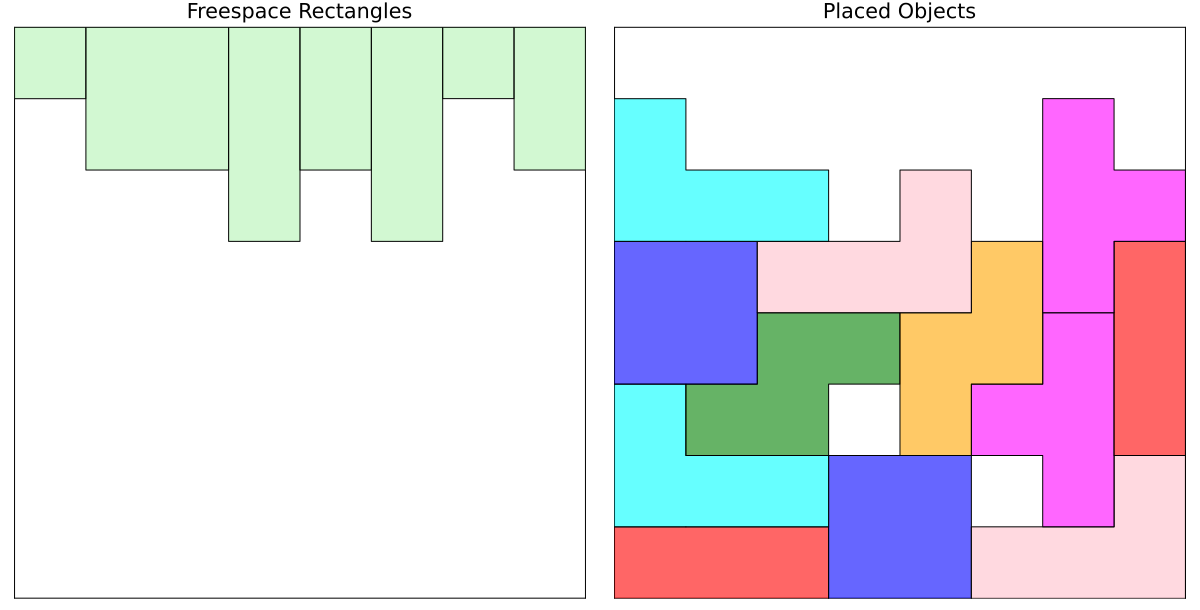}
  \caption{An intermediate screenshot of convex hull generation and placed objects for 2D Tetris.}
  \label{fig:tetris}
  \vspace{-0.5cm}
\end{figure}

We generalize our method to a full Tetris problem, where we include all 7 different tetrominoes, as shown in Fig. \ref{fig:tetris}. The objective function for each placement is $f(t, q) = \sum_{i=1}^l (t+Q(q)V_i)_x + 5\sum_{i=1}^l (t+Q(q)V_i)_y$, which optimizes the object to be at the bottom-left area. We test our method against a grid search algorithm on a number of random sequences. We show the results of grid search on different grid sizes in Table \ref{tab:full_tetris_time_whole_search}. We show the results of SOPO-CD on different top $k$ convex pairs in Table \ref{tab:full_tetris_time_whole}. At each iteration, the preprocessing takes around $6 \unit{\us}$ and solving $2k$ placement optimization problems for each object takes $10-20 \unit{\ms}$. As the solver batch size and top-$k$ increase, the number of successful placements and the occupancy rate rise, reaching 77\% when setting the number of threads to 8 and $k = 1$. The computation time of SOPO-CD is consistently performant, and it is $50$ times faster than grid search at the largest grid size. 

\begin{table}[bp]
  \centering
  \caption{Performance for full Tetris using grid search.}
  \label{tab:full_tetris_time_whole_search}
  \resizebox{0.5\textwidth}{!}{
  \begin{tabular}{lcccc}
    \toprule
    & grid size & compute time per obj (\unit{\ms}) & successful placements & occupancy rate (\%) \\
    \midrule
    \multirow{3}{*}{\textbf{Grid-Search}} 
    & 8$\times$8      &  0.03   &  14.10  &  84.86  \\
    & 80$\times$80    &  1.69   &  13.97  &  83.92  \\
    & 800$\times$800  &  1455.24  &  13.67  & 82.29  \\
    \bottomrule
  \end{tabular}}
\end{table}

\begin{table}[bp]
  \centering
  \caption{Performance for full Tetris.}
  \label{tab:full_tetris_time_whole}
  \resizebox{0.5\textwidth}{!}{
  \begin{tabular}{lccccc}
    \toprule
    & top $k$ & preprocess per obj (\unit{\us}) & compute time per obj (\unit{\ms}) & successful placements & occupancy rate (\%) \\
    \midrule
    \multirow{3}{*}{\begin{tabular}{@{}l@{}}\textbf{SOPO-CD} \\ (n\_threads=2)\end{tabular}} 
    & 1  &  5.40  &  9.05   &  9.95   & 59.46  \\
    & 2  &  5.59  &  13.52  &  10.88  & 65.13  \\
    & 3  &  5.83  &  18.70  &  11.06  & 66.23  \\
    \midrule
    \multirow{3}{*}{\begin{tabular}{@{}l@{}}\textbf{SOPO-CD} \\ (n\_threads=4)\end{tabular}} 
    & 1  &  5.73  &  12.39  &  12.06  & 72.28  \\
    & 2  &  6.09  &  19.06  &  12.28  & 73.62  \\
    & 3  &  5.84  &  26.05  &  12.32  & 73.94  \\
    \midrule
    \multirow{3}{*}{\begin{tabular}{@{}l@{}}\textbf{SOPO-CD} \\ (n\_threads=8)\end{tabular}} 
    & 1  &  5.58  &  16.19  &  12.75  & 76.56  \\
    & 2  &  6.27  &  24.02  &  12.61  & 75.66  \\
    & 3  &  6.39  &  34.31  &  12.79  & 76.79  \\
    \bottomrule
  \end{tabular}}
\end{table}

\subsection{Solve 3D Bin Packing}

We evaluate our method on the 3D Bin Packing problem and compare it with the standard grid search algorithm on the heightmap. The container size is $[5, 5, 5]$, and the packed objects are 3D rectangular shapes with sizes $[3, 2, 1], [2, 2, 2], [1, 2, 2], [1, 1, 2]$. We use the heightmap to represent the occupied objects. The heightmap is characterized by a predefined grid size $M \times M$. For every object placement, we use the same objective function as the grid search algorithm, $f(t, q) =  t_x+t_y+5t_z$, where $t = [t_x, t_y, t_z]$ is the center of the cuboid. This is the same heuristic as in DBLF \cite{karabulut2004hybrid}. The time complexity for the grid search algorithm is $O(M^2)$.


\begin{figure}[tbp]
  \centering
  \includegraphics[width=0.5\textwidth]{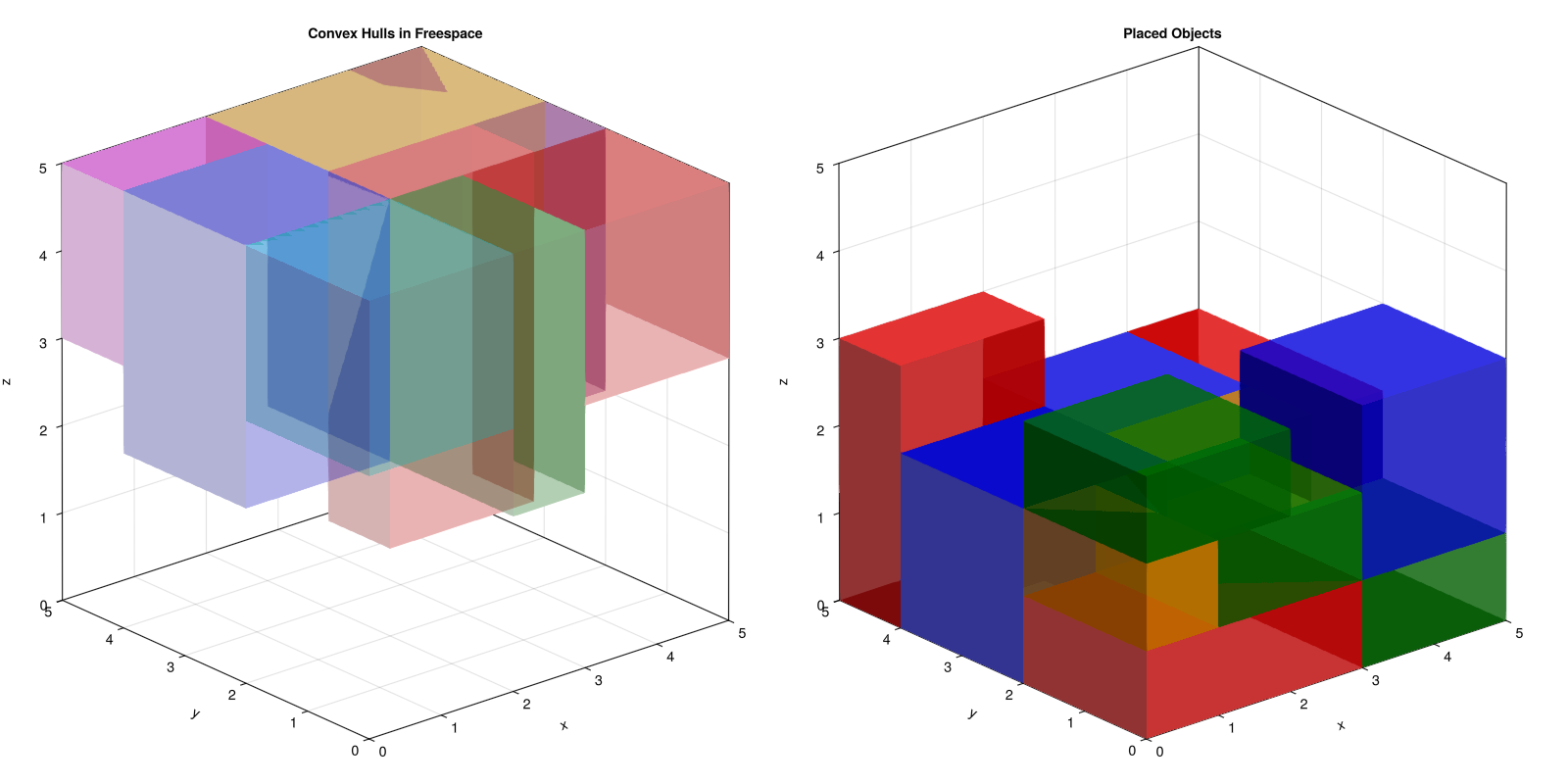}
  \caption{An intermediate screenshot of convex hull generation and placed objects for 3D Bin Packing.}
  \label{fig:packing}
  \vspace{-0.5cm}
\end{figure}


We test SOPO-CD against the grid search algorithm on a number of random sequences of objects. The convex hull generation and placed objects at an intermediate step are visualized in Fig. \ref{fig:packing}. The quantitative results are shown in Table \ref{tab:bin_packing_time_whole}. We compare the average preprocessing and computation times for each object, the number of successful placements across the whole sequence, and the occupancy rate for all placements. The preprocessing includes convex hull generation and constraint extraction. It is shown that the search time grows drastically as the grid size increases. The preprocessing time for our method also increases with grid size, but remains within 1$\unit{\ms}$. Our method generates a set of axis-aligned convex hulls, and we select the 3 lowest convex hulls and run the batched SQP solver on each hull. The computation time of our method shows consistent performance across varying grid sizes and grows only linearly with the solver batch size and the number of selected convex hulls. At the largest grid size, our method is more than 200 times faster than the grid search algorithm. The number of successful placements and the occupancy rate also increase significantly as the solver batch size increases, and the performance is competitive with that of the grid search. 

\begin{table}[htbp]
  \centering
  \caption{Performance for 3D Bin Packing.}
  \label{tab:bin_packing_time_whole}
  \resizebox{0.5\textwidth}{!}{
  \begin{tabular}{lccccc}
    \toprule
    & grid size & preprocess per obj (\unit{\us}) & compute time per obj (\unit{\ms}) & successful placements & occupancy rate (\%) \\
    \midrule
    \multirow{5}{*}{\textbf{Grid Search}} 
    & 10$\times$10   & - & 0.01     & 21.80 & 85.32  \\
    & 50$\times$50   & - & 0.41     & 21.93 & 85.47   \\
    & 100$\times$100 & - & 6.28     & 22.04 & 86.63   \\
    & 250$\times$250 & - & 222.60   & 21.70 & 85.47   \\
    & 500$\times$500 & - & 3940.55  & 22.20  & 85.97   \\
    \cmidrule(lr){1-6}
    \multirow{5}{*}{\begin{tabular}{@{}l@{}}\textbf{SOPO-CD} \\ (n\_threads=2)\end{tabular}} 
    & 10$\times$10   & 1.46   & 7.32 & 16.17 & 62.02  \\
    & 50$\times$50   & 9.36   & 7.49 & 17.02 & 64.42   \\
    & 100$\times$100 & 31.83  & 7.22 & 16.02 & 62.84   \\
    & 250$\times$250 & 189.92 & 7.67 & 16.20 & 64.20   \\
    & 500$\times$500 & 666.90 & 8.02 & 16.42 & 64.53   \\
    \cmidrule(lr){1-6}
    \multirow{5}{*}{\begin{tabular}{@{}l@{}}\textbf{SOPO-CD} \\ (n\_threads=4)\end{tabular}} 
    & 10$\times$10   & 1.57   & 10.74 & 19.77 & 76.92  \\
    & 50$\times$50   & 9.66   & 10.78 & 19.60 & 77.25   \\
    & 100$\times$100 & 33.32  & 10.64 & 18.53 & 74.73   \\
    & 250$\times$250 & 180.40 & 10.08 & 19.94 & 76.38   \\
    & 500$\times$500 & 694.34 & 10.93 & 19.94 & 77.86   \\
    \cmidrule(lr){1-6}
    \multirow{5}{*}{\begin{tabular}{@{}l@{}}\textbf{SOPO-CD} \\ (n\_threads=8)\end{tabular}} 
    & 10$\times$10   & 1.69   & 19.90 & 20.57 & 81.00 \\
    & 50$\times$50   & 9.52   & 14.31 & 20.45 & 81.31   \\
    & 100$\times$100 & 31.21  & 14.04 & 21.33 & 81.72   \\
    & 250$\times$250 & 176.49 & 13.92 & 22.18 & 83.30   \\
    & 500$\times$500 & 692.78 & 14.68 & 20.30 & 80.26  \\
    \bottomrule
  \end{tabular}}
\end{table}

\subsection{Real-world Experiment}

For the real-world experiment, we deploy our algorithm to solve the Tangram puzzle using an Allegro hand and an Xarm, as shown in Fig. \ref{fig:tangram_real}. The system runs in an open loop; once the object's goal location is calculated, it is fed into the motion planner to perform pick-and-place. We predefine the contact points for each object and the finger, and run an online optimization to compute the force closure. We plan and control the arm using an off-the-shelf sampling-based motion planner \cite{sucan2012open}.


\section{Conclusion}

We introduced SOPO-CD, a sequential optimization framework that solves the object placement problem with differentiable collision constraints in free space. We divided the collision-free space into various convex hulls and assigned the convex hull to each convex body. We implemented a custom SQP solver and solved the optimal placement in milliseconds. We validated our method on a set of tasks, including the 2D Tangram puzzle, 2D Tetris puzzle, and 3D Bin Packing, and demonstrated strong computational performance and packing utility. We also demonstrated solving a real-world Tangram puzzle using an Allegro Hand and an Xarm. 

Despite these strengths, some limitations remain to be addressed in the future. The convex hull selection is not exhaustive, and our 3D convex decomposition module generates axis-aligned cuboids; a more general method is needed to handle more complex scenarios. Our algorithm is greedy in choosing the current placement without looking ahead to the future. The objective functions/heuristics for each object are predefined, and it is interesting to combine tree search and backtracking over different combinations of heuristics as in \cite{toussaint2017multi, silver2018general}.

\section*{ACKNOWLEDGMENT}

This project has received funding from the European Union’s Horizon Europe programme under Grant Agreement No. 101120823, project MANiBOT.



\bibliographystyle{IEEEtran}
\bibliography{references}




\end{document}